\newtheorem{lemma}[subsection]{Lemma}
\newtheorem{theorem}[subsection]{Theorem}
\newtheorem{definition}[subsection]{Definition}
\icmltitlerunning{Data-Distributed Weighted Majority and Online Mirror Descent}
\begin{document}

\twocolumn[
\icmltitle{Data-Distributed Weighted Majority and Online Mirror Descent}

\icmlauthor{}{} 
\icmladdress{{\bf Keywords}: online learning, stochastic approximation,
             mirror descent, parallel computing, large scale} 

\icmlauthor{Hua Ouyang}{houyang@cc.gatech.edu} \icmladdress{College of Computing, Georgia Institute of Technology}
\icmlauthor{Alexander Gray}{agray@cc.gatech.edu} \icmladdress{College of Computing, Georgia Institute of Technology}

\vskip 0.3in
]

\begin{abstract}
In this paper, we focus on the question of the extent to which online learning can benefit from distributed computing. We focus on the setting in which $N$ agents online-learn cooperatively, where each agent only has access to its own data. We propose a generic data-distributed online learning meta-algorithm. We then introduce the Distributed Weighted Majority and Distributed Online Mirror Descent algorithms, as special cases. We show, using both theoretical analysis and experiments, that compared to a single agent: given the same computation time, these distributed algorithms achieve smaller generalization errors; and given the same generalization errors, they can be $N$ times faster.
\end{abstract}

\section{Introduction}
\label{sec:intro}
The real world can be viewed as a gigantic distributed system that evolves over time. An intelligent agent in this system can learn from two sources: examples from the environment, as well as information from other agents.  One way to state the question addressed by the {\em Data-Distributed Online Learning} (DDOL) schemes we introduce can be informally described as follows:  within an interconnected network of learning agents, although an agent only receives $m$ samples of input data, can it be made to perform as if it has received $M > m$ samples?  Here the performance is measured by generalization abilities (prediction error or regret for the online setting). In other words, to what extent can an agent make fewer generalization errors by utilizing information from other online-learning agents?

This question can also be phrased another way. In recent years, the increasing ubiquity of massive datasets as well as the opportunities for distributed computing (cloud computing, multi-core, etc.), have conspired to spark much interest in developing distributed algorithms for machine learning (ML) methods. While it is easy to see how parallelism can be obtained for most of the computational problems in ML, the question arises whether online learning, which appears at first glance to be inherently serial, can be fruitfully parallelized to any significant degree. While several recent papers have proposed distributed schemes, the question of whether significant speedups over the default serial scheme can be achieved has remained fairly open. Theory establishing or disallowing such a possibility is particularly to be desired. To the best of our knowledge, this paper is the first work that answers these questions for the general online learning setting.

In this paper we show both theoretically and experimentally that significant speedups are possible in online learning by utilizing parallelism.  We introduce a general framework for data-distributed online learning which encapsulates schemes such as weighted majority, online subgradient descent, and online exponentiated gradient descent.

%
\subsection{Related Work}
In an empirical study \cite{delalleau07}, the authors proposed to make a trade-off between batch and stochastic gradient descent by using averaged mini-batches of size $10\sim 100$. A parameter averaging scheme was proposed in \cite{mann09} to solve a batch regularized conditional max entropy optimization problem, where the distributed parameters from each agent is averaged in the final stage. A distributed subgradient descent method was proposed in \cite{nedic09dsmmao} and an incremental subgradient method using a Markov chain was proposed in \cite{johnsson09rismdons}. In \cite{duchi10}, a distributed dual averaging algorithm was proposed for minimizing convex empirical risks via decentralized networks. Convergence rates were reported for various network topologies. The same idea of averaged subgradients was extended to centralized online learning settings in \cite{dekel10}. The problem of multi-agent learning has been an active research topic in reinforcement learning. In this paper, we will focus on the online supervised learning setting.
%

\section{Setup and DDOL Meta-Algorithm}
In this paper, we assume that each agent only has access to a portion of the data locally and communications with other agents. Suppose we have $N$ learning agents. At round $t$, the $i^{\text{th}}$ learning agent $\mathcal{A}_i : i=1,\ldots, N$ receives an example $\mathbf{x}_i^t\in\mathbb{R}^D$ from the environment and makes a prediction $y_i^t$. The environment then reveals the correct answer $l_i^t$ corresponding to $\mathbf{x}_i^t$ and the agent suffer some loss $L(\mathbf{x}_i^t, y_i^t, l_i^t)$. The parameter set of an agent $\mathcal{A}_i$ at time $t$ is $\mathbf{w}_i^t \in \mathcal{W}$. Each agent is a vertex of a connected graph $G=(\mathcal{A},\mathcal{E})$. There will be a bidirectional communication between $\mathcal{A}_i$ and $\mathcal{A}_j$ if they are neighbors (connected by an edge $e_{ij}$). $\mathcal{A}_i$ has $N_i - 1$ neighbors.

The generic meta-algorithm for data-distributed online learning (\textsf{DDOL}) is very simple: each agent $\mathcal{A}_i$ works according to the following procedure:
\begin{algorithm}
\caption{\textsf{Distributed Online Learning (DDOL)}}
\label{alg:DOL}
\begin{algorithmic}
\FOR{$t=1,2,\ldots$}
  \STATE $\mathcal{A}_i$ makes local prediction(s) on example(s) $\mathbf{x}_i^t$;
  \STATE $\mathcal{A}_i$ \textsf{Update} $\mathbf{w}_i^t$ using local correct answer(s) $l_i^t$;
  \STATE $\mathcal{A}_i$ \textsf{Communicate} $\mathbf{w}_i^t$ with its neighbors and do \textsf{Weighted Average} over $\mathbf{w}_j^t,\ j=1,\ldots,N_i$;
\ENDFOR
\end{algorithmic}
\end{algorithm}

To derive a distributed online learning algorithm, one need to specify the \textsf{Update}, \textsf{Communicate} and \textsf{Weighted Average} schemes. In the following sections, we will discuss how to use two classic online learning methods as the basic \textsf{Update} scheme, and how the combination with different \textsf{Communicate/Weighted Average} schemes leads to different performance guarantees.

\section{Distributed Weighted Majority}\label{sec:dwm}
We firstly propose two expert-advise-based online classification algorithms which can be regarded as distributed versions of the classic \emph{Weighted Majority algorithm} (WMA) \cite{wma89littlestone}. For simplicity, we assume that in both algorithms, all the experts are shared by all the agents, and each agent is adjacent to \emph{all} the other agents ($G$ is a complete graph).

Alg. \ref{alg:dist_wm_imit} is named \emph{Distributed Weighted Majority by Imitation} (\textsf{DWM-I}). In the communication step, each $\mathcal{A}_i$ \textbf{mimics} other agent's operations by penalizing each expert $p$ in the same way as any other agents do, and then makes a geometric averaging.
\begin{algorithm}
\caption{\textsf{DWM-I}: agent $\mathcal{A}_i$	} \label{alg:dist_wm_imit}
\begin{algorithmic}[1]
\STATE Initialize all weights $w_{i_1}^0,\ldots,w_{i_P}^0$ of $P$ shared experts for agent $\mathcal{A}_i$ to $1$.
\FOR{$t=1,2,\ldots$}
    \STATE Given experts' predictions $y_{i_1}^t,\ldots,y_{i_P}^t$ over $\mathbf{x}_i^t$, $\mathcal{A}_i$ predicts
    \STATE $\begin{cases}
            1, &  \text{if }\sum_{p:y_{i_p}=1}w_{i_p}^{t-1} \geq \sum_{p:y_{i_p}=0}w_{i_p}^{t-1}\\
            -1, & \text{otherwise.}
            \end{cases}$
    \STATE Environment reveals $l_{i}^t$ for $\mathcal{A}_i$.
    \STATE $\forall p:\widetilde{w}_{i_p}^t \leftarrow \begin{cases}
            \alpha w_{i_p}^{t-1}, &  \text{if }y_{i_p}^t \neq l_{i}^t \text{ ($0 < \alpha < 1$)}\\
            w_{i_p}^{t-1}, & \text{otherwise.}
            \end{cases}$
    \STATE $w_{i_p}^t \leftarrow \left( \prod_{j=1}^N \widetilde{w}_{j_p}^t \right )^{1/N}$.
\ENDFOR
\end{algorithmic}
\end{algorithm}
The following result gives the upper bound of the average number of mistakes by each agent, assuming that each agent is receiving information from all the other agents.
\begin{theorem}\label{thm:dist_wm_imit}
For Algorithm \ref{alg:dist_wm_imit} with $N$ agents and $P$ shared experts, $\max_i M_i \leq \frac{1}{\log\frac{2}{1+\alpha}}\big( \frac{m_{*}}{N}\log\frac{1}{\alpha}+\log P \big)$, where $M_i$ is the number of mistakes by agent $\mathcal{A}_i$ and $m_{*}$ is the minimum number of mistakes by the best expert over all agents so far.
\end{theorem}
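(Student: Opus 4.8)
The structural fact that makes a bound on $\max_i M_i$ (and not merely the average) attainable is that the weighted-average step collapses all inter-agent differences. Since the right-hand side of the update in the communication step (the last line of Algorithm~\ref{alg:dist_wm_imit}) does not depend on $i$, I would first prove by a one-line induction that after every round all agents hold a common weight vector, $w_{i_p}^t = w_p^t$ for all $i$. Substituting the penalty rule into the geometric average then yields the closed recursion $w_p^t = w_p^{t-1}\,\alpha^{c_p^t/N}$, where $c_p^t\in\{0,\dots,N\}$ counts the agents for which expert $p$ erred at round $t$; unrolling gives $w_p^T = \alpha^{m_p/N}$ with $m_p=\sum_{t=1}^T c_p^t$, the total mistakes of expert $p$ aggregated over all agents and rounds. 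Because a single shared weight vector drives every agent, one common potential will serve all agents at once, so any bound proved for an arbitrary fixed $i$ transfers verbatim to the maximum.

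The plan is then a distributed version of the classical weighted-majority potential argument using $W^t=\sum_p w_p^t$. For the lower bound, keeping only the best expert $p_*$ gives $W^T \ge w_{p_*}^T = \alpha^{m_*/N}$, where $m_*=\min_p m_p$ is the fewest mistakes by any expert aggregated over all agents. For the upper bound I would fix an agent $i$ and examine only the rounds on which $i$ errs: by the weighted-majority prediction rule the experts disagreeing with $i$'s label carry at least half of $W^{t-1}$, and each of them is penalized, so on each such round $W^t \le \tfrac{1+\alpha}{2}\,W^{t-1}$ (this shrinkage step is the crux, discussed below), while $W^t\le W^{t-1}$ on every other round since all multipliers are at most $1$. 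With $W^0=P$ this gives $W^T \le P\big(\tfrac{1+\alpha}{2}\big)^{M_i}$.

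Chaining the two bounds, taking logarithms, and solving for $M_i$ — keeping in mind that both $\log\alpha$ and $\log\tfrac{1+\alpha}{2}$ are negative, so the inequality direction flips exactly once when dividing — isolates $M_i \le \tfrac{1}{\log\frac{2}{1+\alpha}}\big(\tfrac{m_*}{N}\log\tfrac1\alpha + \log P\big)$. Since the right-hand side is independent of $i$, the same inequality holds for $\max_i M_i$, which is precisely the claimed bound. The logarithmic bookkeeping here I would treat as routine.

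The one step that does not come for free, and which I expect to be the main obstacle, is justifying the per-mistake shrinkage factor $\tfrac{1+\alpha}{2}$. A single agent's error only forces $c_p^t\ge 1$ for its wrong experts, hence a multiplier of $\alpha^{1/N}$ rather than $\alpha$, which a direct estimate would turn into the weaker factor $\tfrac{1+\alpha^{1/N}}{2}$. Reaching the stated $\tfrac{1+\alpha}{2}$ requires the disagreeing experts to be penalized by the full $\alpha$, i.e.\ $c_p^t=N$, which holds exactly when mistakes are shared across the network — the same regime in which the $m_*/N$ factor expresses the $N$-fold effective increase in data. I would therefore concentrate the proof's effort on formalizing this synchronization of penalties (or on a sharper aggregate accounting that amortizes the $c_p^t$ across rounds rather than bounding each wrong expert's multiplier by $\alpha^{1/N}$), since everything else reduces to the two elementary potential inequalities above.
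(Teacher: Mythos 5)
Your skeleton --- the common weight vector after averaging, the closed recursion $w_p^t = w_p^{t-1}\alpha^{c_p^t/N}$, the potential $W^t=\sum_p w_p^t$ with lower bound $\alpha^{m_*/N}$ and a halving-style upper bound --- is exactly the paper's argument, and your diagnosis of the one step that ``does not come for free'' is correct; but it is worse than an obstacle to be formalized: the per-mistake factor $\tfrac{1+\alpha}{2}$ is false in general, so the synchronization you would need cannot be established. When $\mathcal{A}_i$ errs at round $t$, the experts carrying at least half of $W^{t-1}$ are wrong on $\mathbf{x}_i^t$, but after the geometric average their common multiplier is $\alpha^{c_p^t/N}$ with only $c_p^t\geq 1$ guaranteed; since the agents receive \emph{different} examples, an expert wrong on $\mathbf{x}_i^t$ need not be wrong on any $\mathbf{x}_j^t$, so $c_p^t=N$ is simply not implied by the algorithm. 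Per mistake of a fixed agent the potential therefore provably drops only by $\tfrac{1+\alpha^{1/N}}{2}$, which yields $M_i \leq \frac{1}{\log\frac{2}{1+\alpha^{1/N}}}\bigl(\tfrac{m_*}{N}\log\tfrac1\alpha+\log P\bigr)$ --- a denominator roughly $N$ times smaller than claimed when $N$ is large. Note that the paper's own proof asserts the full $\tfrac{1+\alpha}{2}$ shrinkage with no more justification than you give (``upon each mistake made by any agent, the total weights of $\mathcal{A}_i$ decreases by a factor of at least $\tfrac12(1-\alpha)$''), so the gap you flag is a gap in the published proof as well, not a detail you failed to reconstruct.

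The ``sharper aggregate accounting'' you mention in passing does work, but it proves a different statement. By the generalized H\"older inequality, $W^t=\sum_p \prod_{j}(\widetilde{w}_{j_p}^t)^{1/N} \leq \prod_j \bigl(\sum_p \widetilde{w}_{j_p}^t\bigr)^{1/N}$, and each agent that errs at round $t$ contributes a factor $\bigl(\tfrac{1+\alpha}{2}\bigr)^{1/N}$ while the others contribute at most $1$; unrolling gives $W^T \leq P\bigl(\tfrac{1+\alpha}{2}\bigr)^{\frac{1}{N}\sum_i M_i}$, and chaining with $W^T\geq \alpha^{m_*/N}$ yields the theorem's right-hand side as a bound on the \emph{average} $\tfrac{1}{N}\sum_i M_i$ --- equivalently the social bound (\ref{eq:dwm_total_bound}) --- but not on $\max_i M_i$. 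The max version genuinely fails in the concentrated regime: if all $m_*$ mistakes of the best expert occur on one agent's stream (the other agents seeing only examples on which every expert is correct), that agent faces in effect single-agent WM with penalty $\alpha^{1/N}$ and can be forced to make on the order of $2m_*$ mistakes, exceeding the claimed $\frac{1}{\log\frac{2}{1+\alpha}}\cdot\frac{m_*}{N}\log\frac1\alpha + O(\log P)$ once $N$ is large. So rather than trying to formalize penalty synchronization, you should either settle for the average/social form (which your H\"older-style amortization delivers rigorously) or add an explicit assumption forcing the $c_p^t$ to be large.
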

\begin{proof}
The proof essentially follows that of WMA. The best expert $E_*$ makes $m_*$ mistakes over all agents so far. So for any $\mathcal{A}_i$, its weight of $E_*$ is $\alpha^{\frac{m_*}{N}}$. Upon each mistake made by any agent, the total weights $\sum_p w_{i_p}^t$ of $\mathcal{A}_i$ decreases by a factor of at least $\frac{1}{2}(1-\alpha)$. So the total weights for $\mathcal{A}_i$ is at most $P\left[ 1-\frac{1}{2}(1-\alpha)  \right]^{M_i}$. Therefore for any $i$, $\alpha^{\frac{m_*}{N}} \leq P\left( \frac{1+\alpha}{2} \right)^{M_i}$.
It follows that
\begin{equation}\label{eq:dist_wm_imit_1}
M_i \leq \frac{1}{\log\frac{2}{1+\alpha}}\left( \frac{m_{*}}{N}\log\frac{1}{\alpha}+\log P \right).
\end{equation}
Taking $\alpha = 1/2$,
\begin{equation}\label{eq:dist_wm_imit_2}
M_i < 2.41\left( \frac{m_{*}}{N}+\log P \right).
\end{equation}
\end{proof}
Comparing (\ref{eq:dist_wm_imit_2}) with the result of WMA: $M<2.41(m_*+\log P)$, in the most optimistic case, if $m_*$ is of the same order as the number of mistakes made by the best expert $E_*$ in the \emph{single} agent scheme, in other words, if $E_*$ makes $0$ error over all agents other than $\mathcal{A}_i$, then the upper bound is decreased by a factor of $1/N$. In the most pessimistic case, if $E_*$ makes exactly the same number of mistakes over \emph{all} $N$ agents, then the upper bound is the same as with a single agent. This happens when all agents are receiving the same inputs and answers from the environment, hence there is no new information being communicated among the network and no communications are needed. In reality, $m_*$ falls between these two extremes.
%

Theorem \ref{thm:dist_wm_imit} is stated from an individual agent point of view. From the social point of view, the total number of mistakes made by all agents $\sum_{i=1}^N M_i$ is upper bounded by 
\begin{equation}\label{eq:dwm_total_bound}
\frac{1}{\log\frac{2}{1+\alpha}}\bigg( m_{*}\log\frac{1}{\alpha}+ N\log P\bigg ),
\end{equation}
which is not larger than that in a single agent scheme ($N\log P$ can be ignored in comparing with the first term $m_*$ which could be very large in practice). Imagine that $NT$ samples are processed in the single agent scheme, while in the $N$ agents scheme, each $\mathcal{A}_i$ process $T$ samples. In the most pessimistic case, upper bound (\ref{eq:dwm_total_bound}) is the same for these two schemes. This is a very good property for parallel computing, since the proposed online DWM can achieve the same generalization capacity, while being $N$ times faster than a serial algorithm. This property is verified by the experiments in Section \ref{sec:exp}.

As in the Randomized Weighted Majority algorithm (RWM) \cite{wma89littlestone}, we can introduce some randomness to our choices of experts by giving each expert a probability of being chosen depending on the performance of this expert in the past. Specifically, in each round we choose an expert with probability $p_i = w_i/\sum_i w_i$. We can have a Distributed Randomized Weighted Majority and obtain a similar upper bound as that of RWM with a constant of $1/N$ as in Theorem \ref{thm:dist_wm_imit}.

The upper bound (\ref{eq:dist_wm_imit_1}) can be further improved by an alternative algorithm (Alg. \ref{alg:dist_wm_avg}), which differs from Alg. \ref{alg:dist_wm_imit} only in the way that each agent utilizes information received from others. Instead of mimicking other agents' operations, an agent now updates its weights by arithmetically \textbf{averaging} together with all the weights it received from its neighbors.
\begin{algorithm}
\caption{\textsf{DWM-A}: agent $\mathcal{A}_i$} \label{alg:dist_wm_avg}
\begin{algorithmic}[1]
\STATE $\cdots$
\STATE $w_{i_p}^t \leftarrow \frac{1}{N}\sum_{j=1}^N \widetilde{w}_{j_p}^t$.
\STATE $\cdots$
\end{algorithmic}
\end{algorithm}

\begin{theorem}\label{thm:dist_wm_avg}
For Algorithm \ref{alg:dist_wm_avg} with $N$ agents and $P$ shared experts, $\max_i{M_i} \leq \frac{1}{\log\frac{2}{1+\alpha}}\left( \sum_t \frac{(1-\alpha)m_*^t}{N-(1-\alpha)m_*^t} +\log P \right)$, where $M_i$ is the number of mistakes by agent $\mathcal{A}_i$ so far and $m_*^t$ is the minimum number of mistakes by the best expert at round $t$ over all agents.
\end{theorem}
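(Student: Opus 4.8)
The plan is to mirror the Weighted Majority argument already used for Theorem~\ref{thm:dist_wm_imit}, replacing the geometric averaging of Alg.~\ref{alg:dist_wm_imit} by the arithmetic one of Alg.~\ref{alg:dist_wm_avg} and tracking how this changes the two standard estimates: a lower bound on the weight of the best expert, and an upper bound on the total weight in terms of $M_i$. First I would record the structural fact that, because $G$ is complete and the update in Alg.~\ref{alg:dist_wm_avg} sets $w_{i_p}^t=\frac1N\sum_j\widetilde w_{j_p}^t$ independently of $i$, all agents hold a \emph{common} weight vector after each communication step; write $W^t=\sum_p w_{i_p}^t$ for this shared total. For the best expert $E_*$, its pre-averaging weight at agent $j$ equals $\alpha\,w_*^{t-1}$ on the $m_*^t$ agents where it errs at round $t$ and $w_*^{t-1}$ elsewhere, so averaging gives the clean recursion
\begin{equation}
w_*^t = w_*^{t-1}\Big(1-\tfrac{(1-\alpha)m_*^t}{N}\Big),
\end{equation}
whence $w_*^T=\prod_t\big(1-\tfrac{(1-\alpha)m_*^t}{N}\big)$ since $w_*^0=1$.

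Next, for the total weight I would argue as in WMA that whenever $\mathcal A_i$ makes a mistake the experts it consulted that voted incorrectly carry at least half of $W^{t-1}$, so penalizing them by $\alpha$ multiplies the relevant total by at most $\tfrac{1+\alpha}{2}$; since weights never grow on the other rounds, this should yield $W^T\le P\big(\tfrac{1+\alpha}{2}\big)^{M_i}$. Combining the two estimates gives $\prod_t\big(1-\tfrac{(1-\alpha)m_*^t}{N}\big)\le W^T\le P\big(\tfrac{1+\alpha}{2}\big)^{M_i}$. Taking logarithms, applying the elementary inequality $\log\frac{1}{1-x}\le\frac{x}{1-x}$ with $x=\tfrac{(1-\alpha)m_*^t}{N}$ to each factor of $\log(1/w_*^T)$, and rearranging then produces precisely the claimed bound; as it holds for every $i$ it holds for $\max_i M_i$. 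The logarithmic manipulation and the convexity inequality are routine, so I do not expect difficulty there.

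The hard part will be the total-weight step. Under the geometric averaging of Alg.~\ref{alg:dist_wm_imit} a single-agent penalty enters every expert's shared weight through the same multiplicative factor that drives the WMA analysis, whereas under arithmetic averaging a mistake by $\mathcal A_i$ is blended with the (possibly unpenalized) contributions of the other $N-1$ agents, which threatens to dilute the per-mistake contraction by a factor of order $N$ and replace $\tfrac{1+\alpha}{2}$ by something like $1-\tfrac{1-\alpha}{2N}$. The delicate point is therefore to show that the undiluted $\tfrac{1+\alpha}{2}$ contraction is the correct one to use: the natural route is to apply the majority argument to the pre-averaging total $\widetilde W_i^t=\sum_p\widetilde w_{i_p}^t$ taken \emph{immediately after} $\mathcal A_i$'s own penalization, where the factor $\tfrac{1+\alpha}{2}$ is exact, and then to reconcile this per-agent quantity with the shared post-averaged total $W^t$ that underlies the best-expert recursion. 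Getting this bookkeeping to close consistently—so that the same $W$ serves both the lower and upper bound—is where I expect the real work, and the main risk, to lie.
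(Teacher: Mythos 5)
Your proposal follows precisely the same route as the paper's own proof: (i) on a complete graph all agents share one weight vector, and the best expert's shared weight satisfies $w_*^T=\prod_t\big(1-\frac{(1-\alpha)m_*^t}{N}\big)$ --- the paper derives exactly this recursion, phrased through $\sum_{i=1}^N w_{i_p}^t$, leading to (\ref{eq:dist_wm_avg_1}); (ii) the WMA-style total-weight bound $\sum_p w_{i_p}^t \le P\big(\frac{1+\alpha}{2}\big)^{M_i}$, which is the paper's (\ref{eq:dist_wm_avg_2}); and (iii) combination via $1-x\ge e^{-x/(1-x)}$, identical to your $\log\frac{1}{1-x}\le\frac{x}{1-x}$. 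The problem is step (ii), exactly the step you single out as the main risk: it is not merely delicate, it is false as stated, and the reconciliation you hope for cannot close. Take $N=2$, $P=2$, initial weights $(1,1)$; at round $1$ let agent $1$ receive an example on which expert $1$ errs and the tie-breaking weighted-majority vote errs, while both experts are correct on agent $2$'s example. Then $M_1=1$, the pre-averaging totals are $\widetilde W_1=1+\alpha$ and $\widetilde W_2=2$, so the shared post-averaging total is $\frac{3+\alpha}{2}$, which exceeds $P\cdot\frac{1+\alpha}{2}=1+\alpha$ for every $\alpha<1$. In general a mistake by $\mathcal{A}_i$ only forces
\begin{equation*}
W^t\;\le\;\frac{1}{N}\Big[\frac{1+\alpha}{2}+(N-1)\Big]W^{t-1}\;=\;\Big(1-\frac{1-\alpha}{2N}\Big)W^{t-1},
\end{equation*}
precisely the diluted contraction you feared; running the rest of your argument with this factor replaces the coefficient $\frac{1}{\log\frac{2}{1+\alpha}}$ by roughly $\frac{2N}{1-\alpha}$, a bound about $N$ times weaker than the theorem claims. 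What can be salvaged at a coefficient of the claimed order is a bound on the \emph{average} number of mistakes: if $k_t$ agents err at round $t$ then $W^t\le\big(1-\frac{k_t(1-\alpha)}{2N}\big)W^{t-1}$, and summing over $t$ gives $\frac{1}{N}\sum_i M_i\le\frac{2}{1-\alpha}\big(\sum_t\frac{(1-\alpha)m_*^t}{N-(1-\alpha)m_*^t}+\log P\big)$ --- but that controls the mean, not $\max_i M_i$.

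You should also know that the paper's own proof does not supply the step you are missing: it asserts (\ref{eq:dist_wm_avg_2}) in a single line, ``for any $\mathcal{A}_i$,'' with no accounting for the averaging dilution, i.e., it contains the very gap you identified. So your diagnosis of where the argument must break is correct, and what you flagged as the expected ``real work'' is a genuine defect of the published argument rather than a trick you failed to see; as things stand, neither your sketch nor the paper's proof establishes the stated bound on $\max_i M_i$.
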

\begin{proof}
Denote the weight of expert $p$ for agent $\mathcal{A}_i$ at round $t$ as $w_{i_p}^t$. Indeed,
\begin{equation*}
\begin{split}
&\sum_{i=1}^N w_{i_p}^t = \bigg(\sum_{i=1}^N w_{i_p}^{t-1} \bigg) \bigg( \frac{N-m_p^t}{N}+\alpha\frac{m_p^t}{N} \bigg ) \\
&= \bigg(\sum_{i=1}^N w_{i_p}^{t-2} \bigg) \bigg[ 1-\frac{m_p^t(1-\alpha)}{N}\bigg] \bigg[ 1-\frac{m_p^{t-1}(1-\alpha)}{N}\bigg]\\
&= \cdots = N \bigg[ 1-\frac{m_p^t(1-\alpha)}{N}\bigg] \cdots \bigg[ 1-\frac{m_p^{1}(1-\alpha)}{N}\bigg].
\end{split}
\end{equation*}
Using $1-x \geq \exp(-x/(1-x)),\ \forall x\in(0,1)$ and the fact that $0\leq m_p^t \leq N$, we have for any agent $\mathcal{A}_i$,
\begin{equation}\label{eq:dist_wm_avg_1}
\begin{split}
&w_{i_p}^t = \prod_t \bigg[1-\frac{m_p^t(1-\alpha)}{N} \bigg]
\geq \exp\bigg(\sum_{t} \frac{-m_p^t(1-\alpha)}{N-m_p^t(1-\alpha)} \bigg).
\end{split}
\end{equation}
On the other hand, for any $\mathcal{A}_i$,
\begin{equation}\label{eq:dist_wm_avg_2}
\sum_{p=1}^P w_{i_p}^t \leq P\left[ 1- \frac{1}{2}(1-\alpha)\right]^{M_i}.
\end{equation}
Since $w_{i_p}^t \leq \sum_{p=1}^P w_{i_p}^t$, combining (\ref{eq:dist_wm_avg_1}) and (\ref{eq:dist_wm_avg_2}),
\begin{equation*}
P\left[ 1- \frac{1}{2}(1-\alpha)\right]^{M_i} \geq \exp\bigg( -\sum_t \frac{m_p^t(1-\alpha)}{N-m_p^t(1-\alpha)} \bigg).
\end{equation*}
It follows that $\forall i=1\ldots N,\ p=1\ldots P$
\begin{equation}\label{eq:dist_wm_avg_3}
M_i \leq \frac{1}{\log\frac{2}{1+\alpha}}\bigg( \sum_t \frac{(1-\alpha)m_p^t}{N-(1-\alpha)m_p^t} +\log P \bigg).
\end{equation}
\end{proof}

Now we are ready to compare the refined bound (\ref{eq:dist_wm_avg_3}) with (\ref{eq:dist_wm_imit_1}) using $m_* \geq \sum_t m_*^t$. Without considering the $\log P$ part of the bounds which is much smaller than the $m_*$ part, it is easy to verify that if $1/2\leq \alpha < 1$, then
\begin{equation}\label{eq:dist_wm_avg_4}
\frac{m_* \log\frac{1}{\alpha}}{N} > \sum_t \frac{(1-\alpha)m_*^t}{N-(1-\alpha)m_*^t}
\end{equation}
without any assumption on $m_*^t$; If $0< \alpha < 1/2$, then the above inequality holds when
\begin{equation}\label{eq:dist_wm_avg_5}
m_*^t \leq N\left(\frac{1}{1-\alpha} - \frac{1}{\log(1/\alpha)} \right).
\end{equation}
The RHS of (\ref{eq:dist_wm_avg_5}) is lower bounded by $0.81N$.
Specifically, when $m_*^t = O(N/2)$ and by taking $\alpha = 1/2$, the difference in (\ref{eq:dist_wm_avg_4}) is
\begin{equation*}
\frac{m_*}{N} - \sum_t \frac{m_*^t}{2N-m_*^t} = O\left(\frac{m_*}{N}\right).
\end{equation*}
Hence the error bound in Theorem $\ref{thm:dist_wm_avg}$ is much lower. Experimental evidence will be provided in Section \ref{sec:exp}.


\section{Distributed Online Mirror Descent}\label{sec:DOMD}
In this section we extend the idea of distributed online learning to Online Convex Optimization (OCO) problems. OCO is an online variant of the convex optimization, which is ubiquitous in many machine learning problems such as support vector machines, logistic regression and sparse signal reconstruction tasks. Each of these learning algorithms has a convex loss function to be minimized.

One can consider OCO as a repeated game between an algorithm $\mathcal{A}$ and the environment. At each round $t$, $\mathcal{A}$ chooses a strategy $\mathbf{w}_t\in \mathcal{W}$ and the environment reveals a convex function $f_t$. Here we assume that all convex functions share the same feasible set $\mathcal{W}$. The goal of $\mathcal{A}$ is to minimize the difference between the cumulation $\sum_t f_t(\mathbf{w}_t)$ and that of the best strategy $\mathbf{w}^*$ it can play in hindsight. This difference is commonly known as \emph{external regret}, defined as below.
\begin{definition}
The regret of convex functions $\mathbf{f} = \{f_t\}$ for $t=1,2,\ldots,T$ is defined as $R(T) = \sum_{t=1}^T f_t(\mathbf{w}_t) - \inf_{\mathbf{w}\in\mathcal{W}}\sum_{t=1}^T f_t(\mathbf{w})$.
\end{definition}
In the distributed setting, this game is played by every agent $\mathcal{A}_i,\ i=1,\ldots, N$. The goal of $\mathcal{A}_i$ is to minimize its own regret $R_i(T)$, named the \emph{individual regret}. We call the sum of individual regrets $R(T) = \sum_{i=1}^N R_i(T)$ \emph{social regret}.

We will present the online mirror descent (OMD) framework which generalize many OCO algorithms such as online subgradient descent \cite{zinkevich03ocp}, Winnow \cite{littlestone88winnow}, online exponentiated gradient \cite{kivinen97eg}, online Newton's method \cite{hazan06lraoco}.

We firstly introduce some notations used in this section. A distance generating function $\omega(\mathbf{u})$ is a continuously differentiable function that is $a$-strongly convex w.r.t. some norm $\|\cdot\|$ associated with an inner product $\left\langle \cdot, \cdot \right\rangle$. Using Bregman divergence $\psi(\mathbf{u},\mathbf{v})= \omega(\mathbf{u})-\omega(\mathbf{v}) - \left\langle \nabla\omega(\mathbf{v}), \mathbf{u}-\mathbf{v} \right\rangle$ as a proximity function, the update rule of OMD can be expressed as
$\mathbf{w}_{t+1} \leftarrow \arg\min_{\mathbf{z}\in \mathcal{W}} \eta_t \left\langle \mathbf{g}_t, \mathbf{z}-\mathbf{w}_t \right\rangle + \psi(\mathbf{z},\mathbf{w}_t)
$, where $\mathbf{g}_t$ is a subgradient of $f_t$ at $\mathbf{w}_t$ and $\eta_t$ is a learning rate which plays an important role in the regret bound. Denote the dual norm of $\|\cdot\|$ as $\|\cdot\|_*$.

Suppose agent $\mathcal{A}_i$ has $N_i - 1$ neighbors. We propose Distributed Online Mirror Descent algorithm in Alg. \ref{alg:dist_domd}. In this algorithm, the update rule has explicit expressions for some special proximity functions $\psi(\cdot,\cdot)$. Next we derive distributed update rules for two well-known OMD examples: Online Gradient Descent (OGD) and Online Exponentiated Gradient (OEG).
\begin{algorithm}[h!]
\caption{\textsf{DOMD}: agent $\mathcal{A}_i$} \label{alg:dist_domd}
\begin{algorithmic}[0]
\STATE Initialize $w_{i}^1 \in \mathcal{W}$
\FOR{$t=1,2,\ldots$}
	\STATE Local prediction using $\mathbf{w}_i^t$.
    \STATE $\mathbf{w}_i^{t+1} \leftarrow \arg\displaystyle\min_{\mathbf{z}\in \mathcal{W}} \sum_{j=1}^{N_i} \bigg[ \eta_t \left\langle \mathbf{g}_j^t, \mathbf{z}-\mathbf{w}_j^t \right\rangle + \psi(\mathbf{z},\mathbf{w}_j^t) \bigg]$.
\ENDFOR
\end{algorithmic}
\end{algorithm}

\subsection*{Distributed OGD}
Taking $\psi(\mathbf{u},\mathbf{v}) = \frac{1}{2}\|\mathbf{u}-\mathbf{v}\|_2^2$ (i.e. the proximity is measured by squared Euclidean distance), an agent $\mathcal{A}_i$ needs to solve the minimization
$\min_{\mathbf{z}} \sum_{i=1}^{N_i}\left[ \eta_t \left\langle \mathbf{g}_i^t, \mathbf{z}-\mathbf{w}_i^t \right\rangle + \frac{1}{2}\|\mathbf{z}-\mathbf{w}_i^t\|_2^2 \right]
$,
which leads to a simple \textsf{DOGD} updating rule
\begin{equation}\label{eq:DOGD}
\mathbf{w}_i^{t+1} \leftarrow \frac{1}{N_i} \sum_{j=1}^{N_i} \bigg( \mathbf{w}_j^t - \eta_t \mathbf{g}_j^t \bigg).
\end{equation}

\subsection*{Distributed OEG}
Taking the unnormalized relative entropy as the proximity function $\psi(\mathbf{u},\mathbf{v}) = \sum_{d=1}^D u_d\ln u_d - \sum_{d=1}^D v_d\ln v_d - (\ln \mathbf{v} + I)^T(\mathbf{u}-\mathbf{v})$, we can solve the minimization $\min_{\mathbf{z}} \sum_{j=1}^{N_i} \big[ \eta_t \left\langle \mathbf{g}_j^t, \mathbf{z}-\mathbf{w}_j^t \right\rangle + \sum_{d=1}^D z_d\ln z_d - \sum_{d=1}^D (\mathbf{w}_j^t)_d \ln (\mathbf{w}_j^t)_d - (\ln \mathbf{w}_j^t + I)^T(\mathbf{z}-\mathbf{w}_j^t) \big]$, and obtain the update rule for \textsf{DOEG}:
\begin{equation}\label{eq:DOEG}
\mathbf{w}_i^{t+1} \leftarrow \bigg ( \prod_{j=1}^{N_i} \mathbf{w}_j^t e^{-\eta_t \mathbf{g}_j^t} \bigg )^{1/N_i}.
\end{equation}
If the feasible set $\mathcal{W}$ is a simplex ball $\|\mathbf{w}\|_1 \leq S$ instead of $\mathbb{R}^D$, one only needs to do an extra normalization: $\forall d=1,\ldots, D, (\mathbf{w}_i^{t+1})_d \leftarrow \frac{S (\mathbf{w}_i^{t+1})_d}{\sum_d (\mathbf{w}_i^{t+1})_d}$ if $\|\mathbf{w}_i^{t+1}\|_1>S$.

Updating rules (\ref{eq:DOGD}) and (\ref{eq:DOEG}) share the same spirit as stated in the meta-algorithm \ref{alg:DOL}: each agent updates its parameters $\mathbf{w}_i$ individually, then it averages with its neighbors' parameters, either arithmetically, or geometrically. The following results shows how this simple averaging scheme works, in terms of average \emph{individual regrets} $\frac{1}{N}\sum_i R_i(T)$. As in theorem \ref{thm:dist_wm_imit} and theorem \ref{thm:dist_wm_avg}, for simplicity, we assume that the graph $G$ is complete, i.e. each agent has $N-1$ neighbors.
\begin{lemma}\label{lem:dol}\cite{nemirovski83pcmeo}
Let $P_{\mathbf{w}}(\mathbf{u}) = \arg\min_{\mathbf{z}\in\mathcal{W}} \left\langle \mathbf{u}, \mathbf{z}- \mathbf{w} \right\rangle + \psi(\mathbf{z}, \mathbf{w}) $, for any $\mathbf{v},\mathbf{w}\in\mathcal{W}$ and $\mathbf{u} \in \mathbb{R}^D$ one has
\begin{equation}\label{eq:lem_dol}
\left\langle \mathbf{u}, \mathbf{w}-\mathbf{v} \right\rangle \leq \psi(\mathbf{w},\mathbf{v}) - \psi(P_{\mathbf{w}}(\mathbf{u}), \mathbf{v}) + \frac{\|\mathbf{u}\|_*^2}{2a}.
\end{equation}
\end{lemma}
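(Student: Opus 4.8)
The plan is to reproduce the classical one-step estimate behind mirror descent \cite{nemirovski83pcmeo}, whose three ingredients are the first-order optimality of the prox map, the three-point identity for Bregman divergences, and the $a$-strong convexity of $\omega$. Write $\mathbf{w}^+ = P_{\mathbf{w}}(\mathbf{u})$. First I would characterize $\mathbf{w}^+$ as the minimizer over the convex set $\mathcal{W}$ of the convex map $\mathbf{z}\mapsto \langle\mathbf{u},\mathbf{z}-\mathbf{w}\rangle + \psi(\mathbf{z},\mathbf{w})$. Because the minimization is constrained, the correct optimality statement is a variational inequality rather than a vanishing-gradient equation, and this is a point to treat carefully when $\mathcal{W}\neq\mathbb{R}^D$ and $\mathbf{w}^+$ can lie on the boundary. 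Since $\nabla_{\mathbf{z}}\psi(\mathbf{z},\mathbf{w}) = \nabla\omega(\mathbf{z})-\nabla\omega(\mathbf{w})$, optimality reads $\langle \mathbf{u}+\nabla\omega(\mathbf{w}^+)-\nabla\omega(\mathbf{w}),\,\mathbf{z}-\mathbf{w}^+\rangle \geq 0$ for all $\mathbf{z}\in\mathcal{W}$; evaluating at the feasible point $\mathbf{z}=\mathbf{v}$ gives $\langle\mathbf{u},\mathbf{w}^+-\mathbf{v}\rangle \leq \langle\nabla\omega(\mathbf{w})-\nabla\omega(\mathbf{w}^+),\,\mathbf{w}^+-\mathbf{v}\rangle$.

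Next I would split $\langle\mathbf{u},\mathbf{w}-\mathbf{v}\rangle = \langle\mathbf{u},\mathbf{w}-\mathbf{w}^+\rangle + \langle\mathbf{u},\mathbf{w}^+-\mathbf{v}\rangle$ and bound the second summand with the optimality inequality above. The gradient inner product that appears is then converted into divergences by the three-point identity $\langle\nabla\omega(\mathbf{a})-\nabla\omega(\mathbf{b}),\,\mathbf{c}-\mathbf{a}\rangle = \psi(\mathbf{c},\mathbf{b})-\psi(\mathbf{c},\mathbf{a})-\psi(\mathbf{a},\mathbf{b})$, which is immediate on expanding the definition of $\psi$. Applied with the comparator $\mathbf{v}$ and the two iterates $\mathbf{w},\mathbf{w}^+$ in the roles $\mathbf{a}=\mathbf{w}^+$, $\mathbf{b}=\mathbf{w}$, $\mathbf{c}=\mathbf{v}$, this yields exactly the telescoping divergence difference $\psi(\mathbf{v},\mathbf{w})-\psi(\mathbf{v},\mathbf{w}^+)$ together with a residual $-\psi(\mathbf{w}^+,\mathbf{w})$. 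Getting this identity set up with the three points in the right roles is the structural core of the argument and the step I would write out most carefully, since it is what makes the divergences collapse to a single difference.

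The remaining term $\langle\mathbf{u},\mathbf{w}-\mathbf{w}^+\rangle$ is the main obstacle: on its own it is uncontrolled, and handling it is where the $\frac{1}{2a}$ factor is born. The plan is to bound it by H\"older's inequality in the norm/dual-norm pair, $\langle\mathbf{u},\mathbf{w}-\mathbf{w}^+\rangle \leq \|\mathbf{u}\|_*\,\|\mathbf{w}-\mathbf{w}^+\|$, and then to split this product by Young's inequality calibrated to the strong-convexity modulus, $\|\mathbf{u}\|_*\,\|\mathbf{w}-\mathbf{w}^+\| \leq \frac{\|\mathbf{u}\|_*^2}{2a} + \frac{a}{2}\|\mathbf{w}-\mathbf{w}^+\|^2$. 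Since $a$-strong convexity of $\omega$ gives $\psi(\mathbf{w}^+,\mathbf{w}) \geq \frac{a}{2}\|\mathbf{w}^+-\mathbf{w}\|^2$, the quadratic penalty is absorbed by the residual $-\psi(\mathbf{w}^+,\mathbf{w})$ from the previous step and cancels. Choosing the Young split with precisely the constant $a$ so that this cancellation is exact is the delicate piece of bookkeeping; collecting what survives leaves the divergence difference plus $\frac{\|\mathbf{u}\|_*^2}{2a}$, which is the claimed bound. As a sanity check I would specialize to $\omega=\frac{1}{2}\|\cdot\|_2^2$ with $a=1$ and $\mathcal{W}=\mathbb{R}^D$, where $\mathbf{w}^+=\mathbf{w}-\mathbf{u}$ and every inequality in the chain holds with equality.
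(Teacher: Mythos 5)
Your argument is sound and is the standard proof of this classical estimate: the variational-inequality form of optimality (correctly used, since $\mathcal{W}$ is constrained), the three-point identity (which you state correctly), and the H\"older--Young--strong-convexity step with the constant calibrated to $a$ all fit together exactly as you describe, and your Euclidean sanity check is accurate. Note that the paper itself contains no proof of this lemma at all --- it is imported by citation from \cite{nemirovski83pcmeo} --- so there is no internal argument to compare against; your writeup fills in what the paper leaves implicit.

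One discrepancy you should flag explicitly: your chain of inequalities produces the divergence difference $\psi(\mathbf{v},\mathbf{w}) - \psi(\mathbf{v},P_{\mathbf{w}}(\mathbf{u}))$, with the comparator $\mathbf{v}$ in the \emph{first} slot of the Bregman divergence, whereas the lemma as printed has $\psi(\mathbf{w},\mathbf{v}) - \psi(P_{\mathbf{w}}(\mathbf{u}),\mathbf{v})$, with the comparator in the \emph{second} slot. Bregman divergences are not symmetric, so these are different statements. The one you proved is the classical one; the printed one appears to be a transposition slip --- it agrees with yours only for symmetric divergences such as $\psi(\mathbf{u},\mathbf{v})=\frac{1}{2}\|\mathbf{u}-\mathbf{v}\|_2^2$, and a first-order expansion in $\mathbf{u}$ near $\mathbf{0}$ with, e.g., the entropic generator $\omega(\mathbf{w})=\sum_d w_d \ln w_d$ shows the transposed form cannot hold in general (it would force $[\nabla^2\omega(\mathbf{w})]^{-1}\left(\nabla\omega(\mathbf{w})-\nabla\omega(\mathbf{v})\right)=\mathbf{w}-\mathbf{v}$, which fails for nonquadratic $\omega$). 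The mismatch is harmless downstream: the proof of Theorem~\ref{thm:dol} uses the same transposed ordering $\psi(\mathbf{w}_i^t,\mathbf{w}^*)-\psi(\mathbf{w}_i^{t+1},\mathbf{w}^*)$ but only needs telescoping plus a uniform bound on the divergences, which your form supplies equally well. Still, as a proof of the statement \emph{exactly as written} your argument does not (and should not try to) deliver it; state that you prove the classical ordering and note the typo.
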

\begin{theorem}\label{thm:dol}
If $N$ agents in Algorithm \ref{alg:dist_domd} are connected via a complete graph, $f_t$ are convex, distances between two parameter vectors are upper bounded $\sup_{i,j,t}\psi(\mathbf{w}_i^t,\mathbf{w}_j^t) = D$, let $\eta_t = \frac{1}{\sqrt{t}}$, then the average individual regret
\begin{equation}\label{eq:domd_bound}
\frac{1}{N}\sum_{i=1}^N R_i(T) \leq D\sqrt{T} + \frac{1}{2aN^2} \sum_{t=1}^T\bigg( \frac{1}{\sqrt{t}} \big\| \sum_{j=1}^N \mathbf{g}_j^t \big\|_*^2\bigg).
\end{equation}
\end{theorem}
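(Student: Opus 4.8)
The plan is to exploit the fact that on a complete graph the per-agent updates collapse to a single centralized mirror-descent step driven by an averaged subgradient, after which the claim reduces to the standard OMD regret analysis powered by Lemma \ref{lem:dol}. First I would observe that when $G$ is complete the minimization in Alg. \ref{alg:dist_domd} is identical for every agent (the sum $\sum_{j=1}^{N_i}$ ranges over all $N$ agents, so it does not depend on $i$). Hence, starting from a common $\mathbf{w}^1$, the agents stay synchronized, $\mathbf{w}_i^t=\mathbf{w}^t$ for all $i,t$. Substituting $\mathbf{w}_j^t=\mathbf{w}^t$ and pulling the then-common proximity term out of the sum, the update becomes
\[
\mathbf{w}^{t+1} = \arg\min_{\mathbf{z}\in\mathcal{W}}\left[\eta_t\left\langle \bar{\mathbf{g}}^t, \mathbf{z}-\mathbf{w}^t\right\rangle + \psi(\mathbf{z},\mathbf{w}^t)\right] = P_{\mathbf{w}^t}(\eta_t\bar{\mathbf{g}}^t),
\]
where $\bar{\mathbf{g}}^t = \frac{1}{N}\sum_{j=1}^N \mathbf{g}_j^t$. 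This is exactly one OMD step in the notation of Lemma \ref{lem:dol}, with the averaged gradient in the role of the subgradient.

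Next I would linearize. By convexity, the round-$t$ loss gap of agent $i$ against a common comparator $\mathbf{w}^*$ satisfies $f_t^i(\mathbf{w}^t)-f_t^i(\mathbf{w}^*) \leq \left\langle \mathbf{g}_i^t, \mathbf{w}^t-\mathbf{w}^*\right\rangle$, where $\mathbf{g}_i^t$ is agent $i$'s subgradient; summing over $i$ and dividing by $N$ gives
\[
\frac{1}{N}\sum_{i=1}^N\left[f_t^i(\mathbf{w}^t)-f_t^i(\mathbf{w}^*)\right] \leq \left\langle \bar{\mathbf{g}}^t, \mathbf{w}^t-\mathbf{w}^*\right\rangle.
\]
Applying Lemma \ref{lem:dol} with $\mathbf{u}=\eta_t\bar{\mathbf{g}}^t$, $\mathbf{w}=\mathbf{w}^t$, $\mathbf{v}=\mathbf{w}^*$ (so that $P_{\mathbf{w}^t}(\mathbf{u})=\mathbf{w}^{t+1}$) and dividing by $\eta_t$ yields
\[
\left\langle \bar{\mathbf{g}}^t, \mathbf{w}^t-\mathbf{w}^*\right\rangle \leq \frac{\psi(\mathbf{w}^t,\mathbf{w}^*)-\psi(\mathbf{w}^{t+1},\mathbf{w}^*)}{\eta_t} + \frac{\eta_t}{2a}\|\bar{\mathbf{g}}^t\|_*^2.
\]
Summing over $t=1,\ldots,T$ and reading the left-hand side as $\frac{1}{N}\sum_i R_i(T)$ reduces the theorem to bounding the two resulting sums.

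Finally I would treat the two sums separately. For the gradient term, substituting $\eta_t=1/\sqrt{t}$ and $\|\bar{\mathbf{g}}^t\|_*^2 = N^{-2}\big\|\sum_j\mathbf{g}_j^t\big\|_*^2$ immediately produces the $\frac{1}{2aN^2}\sum_t t^{-1/2}\big\|\sum_j\mathbf{g}_j^t\big\|_*^2$ term. For the telescoping Bregman term I would use summation by parts: since $1/\eta_t=\sqrt{t}$ is nondecreasing and each $\psi(\mathbf{w}^t,\mathbf{w}^*)\leq D$,
\[
\sum_{t=1}^T \sqrt{t}\,\big[\psi(\mathbf{w}^t,\mathbf{w}^*)-\psi(\mathbf{w}^{t+1},\mathbf{w}^*)\big] \leq D\Big(1+\sum_{t=2}^T(\sqrt{t}-\sqrt{t-1})\Big)=D\sqrt{T},
\]
which gives the first term $D\sqrt{T}$ and completes the bound.

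The main obstacle, and the place to be careful, is the interaction between the synchronization step and the meaning of regret: the averaged-gradient reduction works cleanly only against a single common comparator, so I would fix $\mathbf{w}^*$ (e.g. a social optimum) and interpret each $R_i(T)$ relative to it before summing, rather than against per-agent comparators. A second delicate point is the constant $D$: synchronized agents give $\psi(\mathbf{w}_i^t,\mathbf{w}_j^t)=0$, so $D$ must in fact be read as a uniform (Bregman-diameter) bound on $\psi$ over $\mathcal{W}$ that also covers $\psi(\mathbf{w}^t,\mathbf{w}^*)$, rather than literally $\sup_{i,j,t}\psi(\mathbf{w}_i^t,\mathbf{w}_j^t)$; and the summation-by-parts step relies on $1/\eta_t$ being monotone, which the choice $\eta_t=1/\sqrt{t}$ guarantees.
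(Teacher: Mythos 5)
Your proposal matches the paper's own proof essentially step for step: the complete-graph assumption collapses every agent's update to the single mirror-descent step $\mathbf{w}^{t+1}=P_{\mathbf{w}^t}\big(\frac{\eta_t}{N}\sum_j\mathbf{g}_j^t\big)$, Lemma \ref{lem:dol} is invoked with $\mathbf{u}=\frac{\eta_t}{N}\sum_j\mathbf{g}_j^t$, $\mathbf{w}=\mathbf{w}_i^t$, $\mathbf{v}=\mathbf{w}^*$, convexity linearizes the per-agent losses against the averaged subgradient, and the Abel/telescoping summation with $\eta_t=1/\sqrt{t}$ produces $D\sqrt{T}$ plus the $\frac{1}{2aN^2}\sum_t t^{-1/2}\|\sum_j\mathbf{g}_j^t\|_*^2$ term, exactly as in the paper. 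Your two cautionary remarks are well taken but do not constitute a different route: the paper likewise (implicitly) measures all $R_i$ against a single common comparator $\mathbf{w}^*$, and likewise uses $\psi(\mathbf{w}_i^t,\mathbf{w}^*)\leq D$ even though the stated hypothesis $\sup_{i,j,t}\psi(\mathbf{w}_i^t,\mathbf{w}_j^t)=D$ would be vacuous (zero) under synchronization, so your reading of $D$ as a Bregman diameter covering the comparator is the correct repair of an imprecision in the theorem statement rather than a deviation from its proof.
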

\begin{proof}
Since $G$ is complete, at a fixed $t$, $\mathbf{w}_i^t$ is the same for any $i$. Hence $\mathbf{w}_i^{t+1} = \arg\min_{\mathbf{z}\in \mathcal{W}} \sum_{j=1}^{N_i} \left[ \eta_t \left\langle \mathbf{g}_j^t,\ \mathbf{z}-\mathbf{w}_j^t \right\rangle + \psi(\mathbf{z},\mathbf{w}_j^t) \right] = \arg\min_{\mathbf{z}\in\mathcal{W}} \big\langle \frac{\eta_t}{N}\sum_{j=1}^N \mathbf{g}_j^t,\ \mathbf{z}-\mathbf{w}_i^t \big\rangle + \psi(\mathbf{z}, \mathbf{w}_i^t) = P_{\mathbf{w}_i^t}(\frac{\eta_t}{N}\sum_{j=1}^N \mathbf{g}_j^t)$.
Let $\mathbf{u} = \frac{\eta_t}{N}\sum_{j=1}^N \mathbf{g}_j^t,\ \mathbf{v}=\mathbf{w}^*,\ \mathbf{w} = \mathbf{w}_i^t$ in (\ref{eq:lem_dol}), we have
\begin{equation*}
\begin{split}
&\big\langle \frac{1}{N}\sum_{j=1}^N \mathbf{g}_j^t,\ \mathbf{w}_i^t -\mathbf{w}^* \big\rangle\\
& \leq  \frac{1}{\eta_t} \left[ \psi(\mathbf{w}_i^t,\mathbf{w}^*) - \psi(\mathbf{w}_i^{t+1},\mathbf{w}^*)\right] + \frac{\eta_t}{2a}\big\|\frac{1}{N}\sum_{j=1}^N \mathbf{g}_j^t\big\|_*^2.
\end{split}
\end{equation*}
Using the convexity of $f_t$ and summing the above inequality over $t$ we have
\begin{equation*}
\begin{split}
&\frac{1}{N}\sum_{i=1}^N R_i(T) = \sum_{t=1}^T  \frac{1}{N}\sum_{j=1}^N \left[ f_j^t(\mathbf{w}_j^t) - f_j^t(\mathbf{w}^*) \right]  \\
& \leq \sum_{t=1}^T \big\langle \frac{1}{N}\sum_{j=1}^N \mathbf{g}_j^t,\ \mathbf{w}_i^t -\mathbf{w}^* \big\rangle \leq \frac{1}{\eta_1}\psi(\mathbf{w}_i^1, \mathbf{w}^*) - \\
& \frac{1}{\eta_T}\psi(\mathbf{w}_i^{T+1}, \mathbf{w}^*) + \sum_{2\leq t\leq T}\left(\frac{1}{\eta_t} - \frac{1}{\eta_{t-1}} \right) \psi(\mathbf{w}_i^t, \mathbf{w}^*) + \\
& \sum_{t=1}^T\bigg( \frac{\eta_t}{2aN^2} \big\| \sum_{j=1}^N \mathbf{g}_j^t \big\|_*^2\bigg).
\end{split}
\end{equation*}
Setting $\eta_t = 1/\sqrt{t}$ and using the assumption on the upper bound of $\psi(\cdot,\cdot)$ we reach the result.
\end{proof}

To appreciate the above theorem, we further assume that the subgradient is upper bounded: $\sup_{\mathbf{w}\in \mathcal{W},t=1,2,\ldots} \|g^t(\mathbf{w})\|_* = G$. In the most optimistic case, at a given round $t$, if the subgradients $\mathbf{g}_j^t,\ j=1,\ldots,N$ are mutually orthogonal, then the second term of the upper bound (\ref{eq:domd_bound}) can be bounded by $\frac{1}{2aN} G^2 \sqrt{T}$, which is $1/N$ times smaller than using a single agent. In the most pessimistic case, if all the subgradients $\mathbf{g}_j^t,\ j=1,\ldots,N$ are exactly the same, then the second term is bounded by $\frac{1}{2a} G^2 \sqrt{T}$, which is the same as in a single agent scheme.

According to the regret bound (\ref{eq:domd_bound}), the social regret $\sum_{i=1}^N R_i(T) \leq N D^2\sqrt{T} + \frac{1}{2aN} \sum_{t=1}^T\big( \eta_t \big\| \sum_{j=1}^N \mathbf{g}_j^t \big\|_*^2\big)$. In the most optimistic case, the bound is $N D^2\sqrt{T} + \frac{G^2}{2a}\sum_{t=1}^T\eta_t \leq (ND^2+\frac{G^2}{a})\sqrt{T}$. In the most pessimistic case, the bound becomes $(ND^2+\frac{NG^2}{a})\sqrt{T}$.

Imagine that $NT$ samples need to be processed. In the single agent scheme, they are accessed by only $1$ agent, while in the $N$ agents scheme, these $NT$ samples are evenly distributed with each $\mathcal{A}_i$ processing $T$ samples. In the most optimistic case, the bound for the $N$ agent scheme is $(ND^2+\frac{G^2}{a})\sqrt{T}$, while in the most pessimistic case, it is $(ND^2+N\frac{G^2}{a})\sqrt{T}$. In comparison, the bound for the single agent scheme is $(D^2\sqrt{N}+\frac{G^2\sqrt{N}}{a})\sqrt{T}$. We cannot draw an immediate conclusion of which one is better, since it depends on the correlations of examples, as well as $D$ and $G$. But it is clear that the $N$ agent scheme is at most $\sqrt{N}$ times larger in its social regret bound, while being $N$ times faster.


%

\section{Experimental Study}\label{sec:exp}
In this section, several sets of online classification experiments will be used to evaluate the theories and the proposed distributed online learning algorithms. Three real-world binary-class datasets \footnote{\url{www.csie.ntu.edu.tw/~cjlin/libsvmtools/datasets/}} from various application domains are adopted. Table \ref{tab:datasets} summarizes these datasets and the parameters used in section \ref{sec:exp_domd}.

\begin{table}[h]
\caption{Dataset facts and parameters.\label{tab:datasets}}
\setlength{\tabcolsep}{4pt}
{\small
\begin{tabular}{c|rrrr|rr}
    \hline
         Name        &  $\#$   & D   &Non-0 & Balance &C &S \\
    \hline
    \hline
        {svmguide1}   & 3,089    & 4   &100\%   & 1: 64.7$\%$& -- & --\\
        {cod-rna}     & 59,535   & 8   &100\% & -1: 66.7$\%$ & 1e-2 & 1e4\\
        {covtype}   & 522,910    & 54  &22\% & -1: 51.2$\%$  & 1e4  & 1e4\\
    \hline
\end{tabular}
}
\vskip -0.1in
\end{table}

To simulate the behavior of multi-agents, we use Pthreads (POSIX Threads) for multi-threaded programming, where each thread is an agent, and they communicate with each other via the shared memory. Barriers are used for synchronizations. All experiments are carried out on a workstation with a 4-core 2.7GHz Intel Core 2 Quad CPU.

\subsection{Distributed Weighted Majority}
To evaluate the proposed DWM algorithms, the simplest decision stumps are chosen as experts, and all the experts are trained off-line. We randomly choose $P\leq D$ dimensions. Within each dimension, $200$ probes are evenly placed between the min and max values of this dimension. The probe with the minimum training error over the whole dataset is selected as the decision threshold. In all the following weighted majority experiments, we choose the penalty factor $\alpha = 0.9$.

The first set of experiments report the behaviors of \textsf{DWM-I} and \textsf{DWM-A} from the individual agent point of view. Each agent share the same $P=4$ experts, and communicates with all the others. Fig. \ref{fig:DWM_I_svmguide1_1234_threads} and \ref{fig:DWM_A_svmguide1_1234_threads} depict the cumulative number of mispredictions made by each thread as a function of the number of samples accessed by a single agent, where $1$, $2$, $3$ and $4$ agents are compared. Each plot in a subfigure represents an agent. It is clear that an agent $\mathcal{A}_i$ makes fewer mistakes $M_i$ as it receives more information from its neighbors. With $4$ agents, $M_i$ is reduced by half comparing with the single agent case. This provides some evidence for the $1/N$ error reduction as stated in Theorem \ref{thm:dist_wm_imit}.
\begin{figure}[h!]
\begin{center}
\includegraphics[width=1.0\columnwidth]{./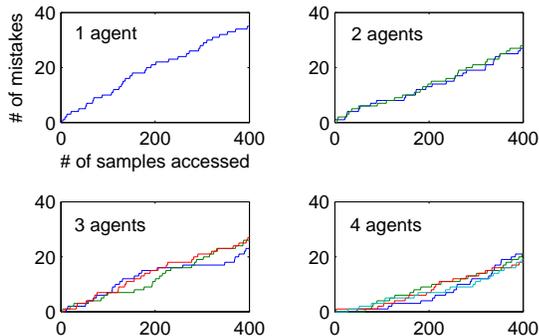}
\caption{\textsf{DWM-I}: cumulative mistakes on svmguide1.}
\label{fig:DWM_I_svmguide1_1234_threads}
\end{center}
\vskip -0.2in
\end{figure}
\begin{figure}[h!]
\begin{center}
\includegraphics[width=1.0\columnwidth]{./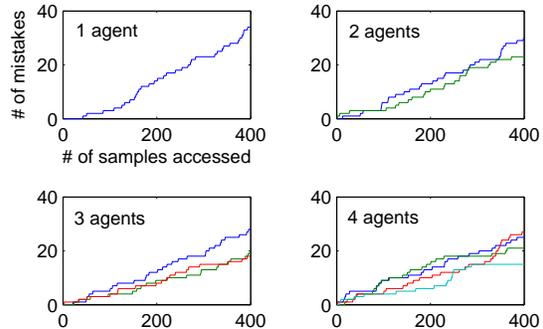}
\caption{\textsf{DWM-A}: cumulative mistakes on svmguide1.}
\label{fig:DWM_A_svmguide1_1234_threads}
\end{center}
\vskip -0.2in
\end{figure}

As discussed in Section \ref{sec:dwm}, from the social point of view, with the same number of samples accessed, the bound (\ref{eq:dwm_total_bound}) of the total number of mistakes made by all agents ($\sum_i M_i$) is almost as small as that in a single agent case. The comparisons for both \textsf{DWM-I} and \textsf{DWM-A} are illustrated in Fig.\ref{fig:DWM_IA_svmguide1_total_misp}. This result is not surprising, since no more information is provided for multiple agents, and one should not hope that $\sum_i M_i$ is much lower than $M$. But on the other hand, the DWM algorithms achieve the same level of mistakes, while they are $N$ times faster. It can also be observed from Fig. \ref{fig:DWM_I_svmguide1_1234_threads}, \ref{fig:DWM_A_svmguide1_1234_threads} and  \ref{fig:DWM_IA_svmguide1_total_misp} that \textsf{DWM-A} makes slightly fewer mistakes than \textsf{DWM-I}.
\begin{figure}[h!]
\begin{center}
\includegraphics[width=1.0\columnwidth]
{./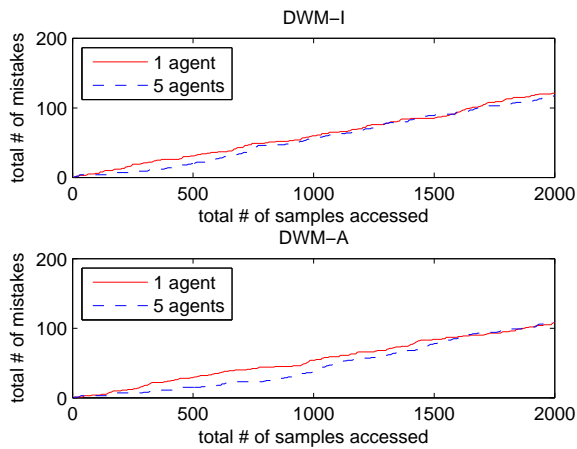}
\caption{svmguide1: total $\#$ of mistakes over all agents.}
\label{fig:DWM_IA_svmguide1_total_misp}
\end{center}
\vskip -0.2in
\end{figure}

To verify the tightness of bound (\ref{eq:dist_wm_imit_1}) and the refined (\ref{eq:dist_wm_avg_3}), we compare the number of mistakes $m_*$ make by the best expert $E_*$ over all agents with that of a single agent $M_i$. Fig. \ref{fig:DWM_IA_svmguide1_best_expert} shows that with $N=2$ or $5$ agents, $m_*$ is around $2$ or $5$ times larger than $M_i$, which means $M_i \approx m_*/N$. However, choosing $\alpha=0.9$ in bound (\ref{eq:dist_wm_imit_1}) leads to $M_i \leq 2.05 m_*/N$. This shows that the bound in Theorem \ref{thm:dist_wm_avg} is indeed tighter than Theorem \ref{thm:dist_wm_imit}.
\begin{figure}[h!]
\begin{center}
\includegraphics[width=1.0\columnwidth]{./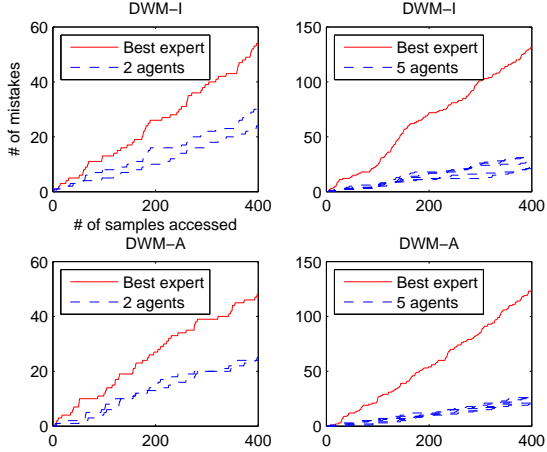}
\caption{svmguide1; $\#$ of mistakes: best expert v.s. multi-agents.}
\label{fig:DWM_IA_svmguide1_best_expert}
\end{center}
\vskip -0.2in
\end{figure}


%
%

\subsection{Distributed Online Mirror Descent}\label{sec:exp_domd}
In this section, several online classification experiments will be carried out using the proposed \textsf{DOGD} and \textsf{DOEG} algorithms. For \textsf{DOGD}, we choose the L2-regularized instance hinge loss function as our convex objective function:
\begin{equation*}
f_t(\mathbf{w}) = C\max\left\{0, 1- l_t \mathbf{w}^T \mathbf{x}_t \right\} + \|\mathbf{w}\|_2^2/2.
\end{equation*}
For \textsf{DOEG}, we take
$f_t(\mathbf{w}) = \max\left\{0, 1- l_t \mathbf{w}^T \mathbf{x}_t  \right\}$
and $\mathcal{W}= \{\mathbf{w}:\|\mathbf{w}\|_1 \leq S\}$. Since the update rule (\ref{eq:DOEG}) cannot change the signs of $\mathbf{w}_t$, we use a similar trick like $EG^{\pm}$ proposed in \cite{kivinen97eg}, i.e. letting $\mathbf{w}=\mathbf{w}^+ - \mathbf{w}^-$, where $\mathbf{w}^+,\mathbf{w}^- > \mathbf{0}$. Since we will not compare the generalization capacities between these two algorithms, in all the following experiments, the parameters of $C$ and $S$ are chosen according to Table \ref{tab:datasets} without any further tuning. The subgradient of the non-smooth hinge loss is take as $\mathbf{g}_t =-l_t \mathbf{x}_t$ if $1- l_i \mathbf{w}^T \mathbf{x}_t > 0$ and $0$ otherwise.

We firstly illustrate the generalization capacities of \textsf{DOGD} and \textsf{DOEG}. Since we do not know $\inf_{\mathbf{w}\in\mathcal{W}}\sum_{t=1}^T f_t(\mathbf{w})$, it is not easy to calculate the individual regret or social regret. Hence we only compare the number of mispredictions and the average accumulated objective function values as functions of the number of samples accessed by a single agent. The results are shown in Fig. \ref{fig:DOGD_misp_cod_14816_threads} $\sim$ \ref{fig:DOEG_obj_cod_12481632_threads}.
\begin{figure}[h!]
\begin{center}
\includegraphics[width=0.9\columnwidth]
{./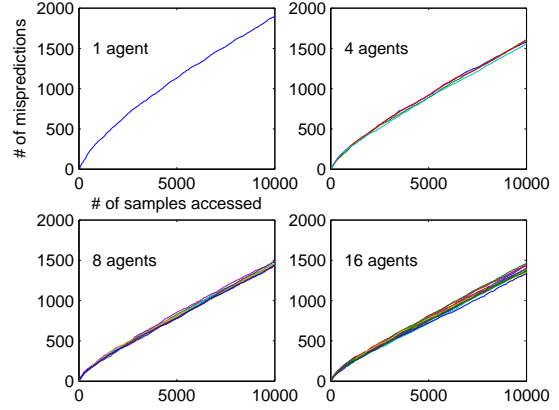}
\caption{DOGD; cod-rna; mispredictions}
\label{fig:DOGD_misp_cod_14816_threads}
\end{center}
\vskip -0.2in
\end{figure}
\begin{figure}[h!]
\begin{center}
\includegraphics[width=1.0\columnwidth]
{./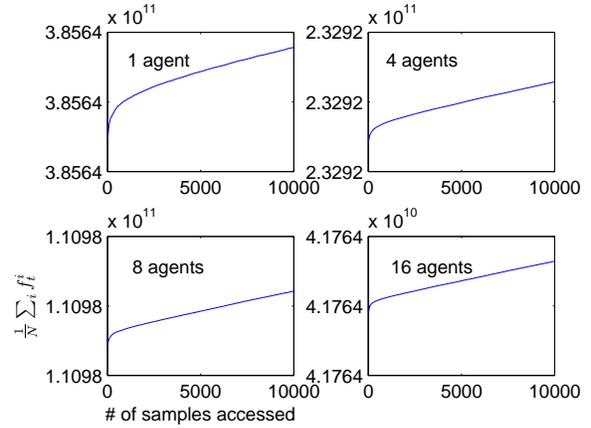}
\caption{DOGD; cod-rna; average objective values}
\label{fig:DOGD_obj_cod_14816_threads}
\end{center}
\vskip -0.2in
\end{figure}

It is clear that for both \textsf{DOGD} and \textsf{DOEG}, the number of mispredictions decreases when more agents communicate with each other. The average objective values $\frac{1}{N}f_t^i(\mathbf{w}_t^i)$ also decrease with the increasing number of agents $N$. However, as shown in Fig. \ref{fig:DOEG_obj_cod_12481632_threads}, when $N=32$, the averaged $\frac{1}{N}f_t^i(\mathbf{w}_t^i)$ is larger than $N=16$. This might be due to the insufficient number of samples of the dataset cod-rna. This conjecture is experimentally verified in Fig. \ref{fig:DOEG_obj_covtype_12481632_threads}, where the size of covtype is $522910$.

\begin{figure}[h!]
\begin{center}
\includegraphics[width=0.9\columnwidth]
{./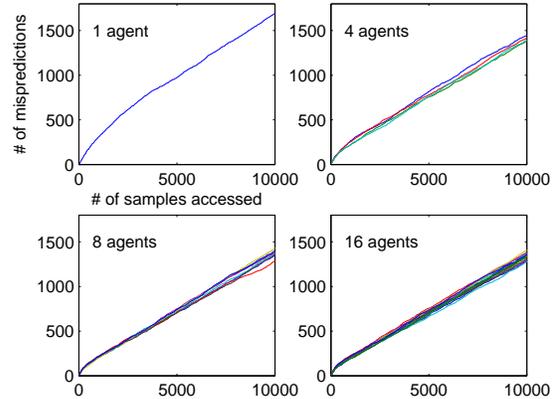}
\caption{DOEG; cod-rna; mispredictions}
\label{fig:DOEG_misp_cod_14816_threads}
\end{center}
\vskip -0.2in
\end{figure}
\begin{figure}[h!]
\begin{center}
\includegraphics[width=0.8\columnwidth]
{./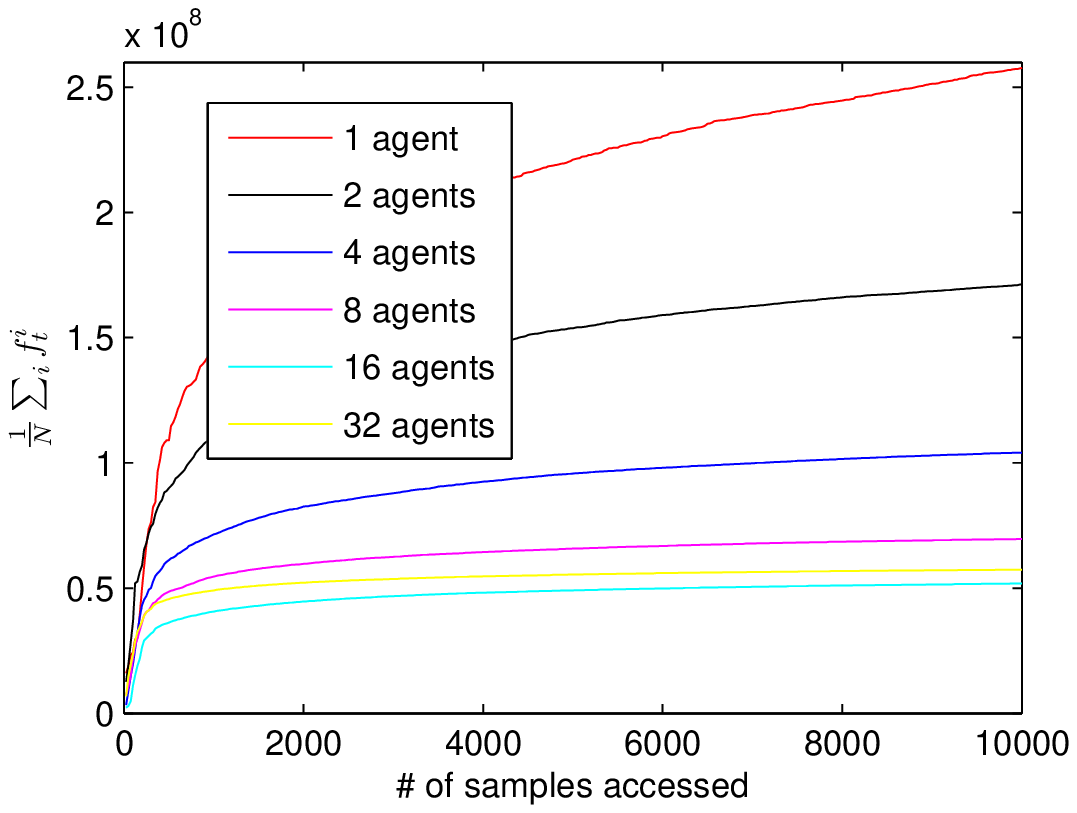}
\caption{DOEG; cod-rna; average objective values}
\label{fig:DOEG_obj_cod_12481632_threads}
\end{center}
\vskip -0.2in
\end{figure}
\begin{figure}[h!]
\begin{center}
\includegraphics[width=0.8\columnwidth]
{./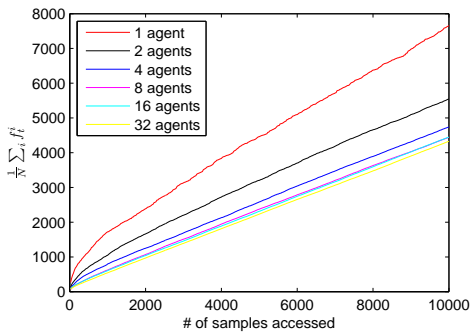}
\caption{DOEG; covtype; average objective values}
\label{fig:DOEG_obj_covtype_12481632_threads}
\end{center}
\vskip -0.2in
\end{figure}

As discussed at the end of Section \ref{sec:DOMD}, the social regret bound of $N$ agents is at most $\sqrt{N}$ times larger than that of a single agent scheme. The next set of experiments will be used to verify this claim. Fig. \ref{fig:DOEG_obj_covtype_total} depicts the result.
\begin{figure}[h!]
\begin{center}
\includegraphics[width=0.9\columnwidth]
{./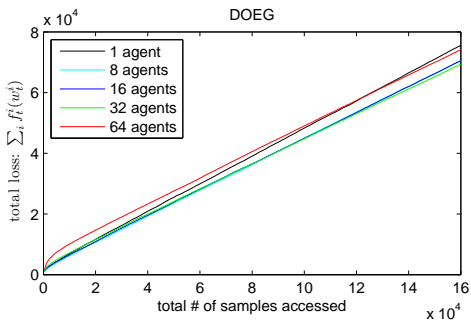}
\caption{DOEG; covtype; total objective values}
\label{fig:DOEG_obj_covtype_total}
\end{center}
\vskip -0.2in
\end{figure}
We can see that the total loss $\sum_{i=1}^N f_t^i(\mathbf{w}_t^i)$ for $N=8,16,32$ is even lower than using a single agent. $N=64$ is slightly higher, but the difference is still much lower than the theoretical $\sqrt{64}$. This suggests that there might exist a bound tighter than (\ref{eq:domd_bound}).




%

\section{Conclusions and Future Work}
We proposed a generic data-distributed online learning meta-algorithm. As concrete examples, two sets of distributed algorithms were derived. One is for distributed weighted majority, and the other is for distributed online convex optimization. Their effectiveness is supported by both analysis and experiments.

The analysis shows that with $N$ agents, DWM can have an upper error bound that is $1/N$ lower than using a single agent. From the social point of view, the bound of total number of errors made by all $N$ agents is the same as using $1$ agent, while processing the same amount of examples. This indicates that DWM attains the same level of generalization error as WM, but is $N$ times faster.

The average individual regret for DOMD algorithms is also much lower than OMD, although it is not $1/N$ lower as in DWM. In the worst case, the bound of social regret is at most $\sqrt{N}$ higher than using a single agent.

In follow-on work, two assumptions made in this paper will be removed to make the proposed algorithms more robust in practical applications. Firstly, as discussed in \cite{duchi10}, the connected graph $G$ does not need to be complete. We are working on distributed active learning and active teaching, which might lead to a data-dependent communication topology. Secondly, the learning process should be fully asynchronous. This brings up the problem of `delays' in label feedbacks \cite{mesterharm05oldlf,langford09slaf}. Moreover, for OCO, with more structural information on $f_t$ rather than the black-box model, we might be able to find better distributed algorithms and achieve tighter bounds.

\bibliographystyle{icml2011}
{\small
\bibliography{Hua_ICML11}
}

\end{document}